\begin{document}
\pagestyle{headings}
\mainmatter

\title{Fast Subspace Clustering Based on the Kronecker Product} 

\titlerunning{Fast Subspace Clustering Based on the Kronecker Product}

\authorrunning{Zhou \emph{et al.}}

\author{Lei Zhou$^1$, Xiao Bai$^1$, Xianglong Liu$^1$, Jun Zhou$^2$, and Hancock Edwin$^3$}

\institute{$^1$Beihang University $^2$Griffith University $^3$University of York, UK}

\maketitle

\begin{abstract}
Subspace clustering is a useful technique for many computer vision applications in which the intrinsic dimension of high-dimensional data is often smaller than the ambient dimension. Spectral clustering, as one of the main approaches to subspace clustering, often takes on a sparse representation or a low-rank representation to learn a block diagonal self-representation matrix for subspace generation. However, existing methods require solving a large scale convex optimization problem with a large set of data, with computational complexity reaches $\mathcal{O}(N^3)$ for $N$ data points. Therefore, the efficiency and scalability of traditional spectral clustering methods can not be guaranteed for large scale datasets. In this paper, we propose a subspace clustering model based on the Kronecker product. Due to the property that the Kronecker product of a block diagonal matrix with any other matrix is still a block diagonal matrix, we can efficiently learn the representation matrix which is formed by the Kronecker product of $k$ smaller matrices. By doing so, our model significantly reduces the computational complexity to $\mathcal{O}(kN^{3/k})$. Furthermore, our model is general in nature, and can be adapted to different regularization based subspace clustering methods. Experimental results on two public datasets show that our model significantly improves the efficiency compared with several state-of-the-art methods. Moreover, we have conducted experiments on synthetic data to verify the scalability of our model for large scale datasets.
\keywords{subspace clustering, kronecker product, sparse representation, large-scale dataset}
\end{abstract}

\section{Introduction}

In many computer vision applications, such as face recognition~\cite{basri2003lambertian,Liu2013Robust}, texture recognition~\cite{Peng2017Subspace} and motion segmentation~\cite{Elhamifar2013Sparse,kanatani2001motion}, visual data can be well characterized by subspaces. Moreover, the intrinsic dimension of high-dimensional data is often much smaller than the ambient dimension~\cite{vidal2011subspace}. This has motivated the development of subspace clustering techniques which simultaneously cluster the data into multiple subspaces and also locate a low-dimensional subspace for each class of data.

Many subspace clustering algorithms have been developed during the past decade, including algebraic~\cite{costeira1998multibody,vidal2005generalized}, iterative~\cite{agarwal2004k,bradley2000k}, statistical~\cite{rao2008motion,tipping1999mixtures}, and spectral clustering methods~\cite{Elhamifar2013Sparse,Liu2013Robust,Lu2014Correlation,Lu2012Robust,Patel2013Latent,Patel2014Kernel,Peng2017Subspace,Peng2015Subspace,Peng2016Feature,you2016scalable}. Among these approaches, spectral clustering methods have been intensively studied due to their simplicity, theoretical soundness, and empirical success. These methods are based on the self-expressiveness property of data lying in a union of subspaces. This states that each point in a subspace can be written as a linear combination of the remaining data points in that subspace. Two typical methods falling into this category are sparse subspace clustering (SSC)~\cite{Elhamifar2013Sparse} and low-rank representation (LRR)~\cite{Liu2013Robust}. SSC uses the $\ell_1$ norm to encourage the sparsity of the self-representation coefficient matrix. LRR uses nuclear norm minimization to make the coefficient matrix low-rank.

Motivated by SSC and LRR, some self-representation based methods have been developed, which use different regularization terms on the coefficient matrix. For example, least squares regression (LSR)~\cite{Lu2012Robust} uses $\ell_2$ regularization on the coefficient matrix. Correlation adaptive subspace segmentation (CASS)~\cite{Lu2014Correlation} uses a mixture of $\ell_1$ and $\ell_2$ regularization. Low-rank sparse subspace clustering (LRSSC)~\cite{Zhuang2012Non} and non-negative low-rank sparse (NNLRS)~\cite{Wang2013Provable} construct regularization term as a blend of $\ell_1$ and the nuclear norms. Because the nuclear norm does not achieve the accuracy in estimating the rank of real world data, subspace clustering with log-determinant approximation (SCLA)~\cite{Peng2015Subspace} replaces the nuclear norm used in LRR by non-convex rank approximations. Feature selection embedded subspace clustering (FSC)~\cite{Peng2016Feature} reveals that not all features are equally important in the recovery of the low-dimensional subspaces. With feature selection both nuclear norm and non-convex rank approximations may give enhanced performance. Latent space sparse subspace clustering (LS3C)~\cite{Patel2013Latent} seeks a linear projection of the data and learns a sparse representation in the projected latent low-dimensional space.

Despite the fact that SSC, LRR and their variants have achieved encouraging results in practice, they have critical limitations. In these approaches, the key idea is to learn a coefficient matrix which denotes the correlation between the data points. As the size of the coefficient matrix is $N^2$ for $N$ data points, the SVD decomposition operation for solving the coefficient matrix has computational complexity of $\mathcal{O}(N^3)$. This is time consuming when the size of the data is large, so the efficiency of these approaches can not be guaranteed. Experiments in~\cite{you2016scalable} and also in this paper show that some existing methods need to run for several hours on a normal computer when the number of test data reaches $10^4$, which constrains the feasibility of these methods.

To overcome this limitation, we propose an efficient subspace clustering model based on the Kronecker product which achieves a significant reduction of computational complexity over quadratic~\cite{van1993approximation}. Using the fact that each data point in a subspace can be written as a linear combination of all other points in that subspace, we can obtain points lying in the same subspace by learning the sparsest combination. Hence, in our model, we first learn a self-representation coefficient matrix formed by the Kronecker product of a series of small sparse matrices. Then we can constract a similarity matrix based on the coefficient matrix. Finally, a segmentation of the data can be obtained by spectral clustering on the similarity matrix.

The main contributions of this paper are as follows:
\begin{enumerate}
\item We propose an efficient subspace clustering model based on the Kronecker product. Our model uses the Kronecker product of a set of small matrices to build the self-representation coefficient matrix, which leads to a significant reduction of space and computational complexity.
\item Our model is adaptive for different regularization based subspace clustering methods~\cite{Elhamifar2013Sparse,Liu2013Robust,Peng2015Robust,Peng2017Subspace}. And we theoretically prove that the Kronecker product approximation in our model has good adaptivity.
\item Experimental results on large scale synthetic data and real world public datasets show that our method leads to a significant improvement in the clustering efficiency compared with the state-of-the-art methods while also achieving competitive accuracy.
\end{enumerate}
\section{Related Work}

In this section, we review some classical and state-of-the-art methods for subspace clustering.

\subsection{Sparse Subspace Clustering (SSC)}

Given a data matrix $X=[x_i\in\mathbb{R}^D]_{i=1}^N$ that contains $N$ data points drawn from $n$ subspaces $\{S_i\}_{i=1}^n$. SSC~\cite{Elhamifar2013Sparse} aims to find a sparse representation matrix $C$ showing the mutual similarity of the points, i.e., $X=XC$. Since each point in $S_i$ can be expressed in terms of the other points in $S_i$, such a sparse representation matrix $C$ always exists. The SSC algorithm finds $C$ by solving the following optimization problem:
\begin{equation}\label{ssc}
\underset{C}{\min} \ \|C\|_1 \quad \text{s.t.} \ X=XC, \ diag(C)=0,
\end{equation}
where $diag(C)=0$ eliminates the trivial solution.

\subsection{Low-Rank Representation (LRR)}
As pointed out in~\cite{Liu2013Robust}, SSC finds the sparsest representation of each data vector individually. There is no global constraint on its solution, so the SSC method may be inaccurate at capturing the global structures of data. Liu \emph{et al.}~\cite{Liu2013Robust} proposed that low rank can be a more appropriate criterion. Similar to SSC, LRR aims to find a low-rank representation of $X$ by solving the following optimization problem, since the nuclear norm $\|C\|_*$ is the best convex approximation of $rank(W)$ over the unit ball of matrices:
\begin{equation}
\underset{C}{\min} \ \|C\|_* \quad \text{s.t.} \ X=XC,
\end{equation}
where $\|C\|_*$ is the sum of the singular values of $C$.

\subsection{Thresholding Ridge Regression (TRR)}

The SSC and LRR methods solve the robust subspace clustering problem by removing the errors from the original data space and obtaining a good affinity matrix based on a clean dataset. Thus they need prior knowledge of the structure of the errors, which usually is unknown in practice. Peng \emph{et al.}~\cite{Peng2015Robust} proposed a robust subspace clustering method which overcomes this limitation by eliminating the effect of errors from the projection space with a model based on thresholding ridge regression (TRR):
\begin{equation}
\underset{C}{\min} \ \|X-XC\|_F^2+\lambda \|C\|_F^2 \quad \text{s.t.} \ diag(C)=0,
\end{equation}
where $\lambda$ is a balancing parameter and small values in $C$ are truncated to zero by thresholding.

Based on TRR, a 2D nonlinear variance regularized ridge regression (NVR3)~\cite{Peng2017Subspace} was proposed to directly use 2D data, and thus the spatial information is maximally retained.

Each of these related works learns the coefficient matrix $C$ with computational complexity $\mathcal{O}(N^3)$. This has limited the scalability of these methods on large scale datasets. Due to the effectiveness of the Kronecker product in reducing the computational complexity of matrix operations, we present a Kronecker product based subspace clustering model which can significantly improve the efficiency of the existing methods.

\section{Kronecker Product Based Model}\label{subsection1}

In this section, we describe our subspace clustering model based on the Kronecker product and develop an associated optimization scheme.

We first introduce the Kronecker product. Let $A\in\mathbb{R}^{m\times n}$, $B\in\mathbb{R}^{p\times q}$, the Kronecker product of matrices $A$ and $B$ is $A\otimes B\in\mathbb{R}^{mp\times nq}$ which is defined as:
$$
A\otimes B
=\begin{bmatrix}
a_{11}\times B  & \cdots\ &a_{1n}\times B\\
 \vdots & \ddots  & \vdots  \\
 a_{m1}\times B & \cdots\ & a_{mn}\times B\\
\end{bmatrix},
$$
where $a_{ij}$ is the element at the $i$-th row and $j$-th column of $A$.
\subsection{Problem Statement and Formulation}

Let $X=[x_i\in\mathbb{R}^D]_{i=1}^N\in\mathbb{R}^{D\times N}$ be a collection of data points drawn from different subspaces. The goal of subspace clustering is to find the segmentation of the points according to the subspaces. Based on the self-expressiveness property of data lying in a union of subspaces, i.e., each point in a subspace can be written as a linear combination of the remaining points in that subspace, we can obtain points lying in the same subspace by learning the sparsest combination. Therefore, we need to learn a sparse self-representation coefficient matrix $C$, where $X=XC$, and $C_{ij}=0$ if the $i$-th and $j$-th data points are from different subspaces.

As our model aims to reduce the computational complexity with data size $N$, we rewrite $X$ as $X=\{y_i^T\in\mathbb{R}^N\}_{i=1}^D$, where $T$ denotes matrix transpose and $y_i\in\mathbb{R}^{N\times 1}$ is the $i$-th dimension of the data points. Without loss of generality, we assume that the self-representation matrix is formed by the Kronecker product of two smaller matrices $C_1$ and $C_2$, where $C_1\in\mathbb{R}^{p_1\times q_1}$ and $C_2\in\mathbb{R}^{p_2\times q_2}$, where $p_1p_2=N$ and $q_1q_2=N$.  Here we use the important property that the Kronecker product of a block diagonal matrix with any other matrix is still a block diagonal matrix (as shown in Figure~\ref{example}). We follow~\cite{Peng2015Robust} to minimize the loss of self-representation. The optimization problem can be written as:
\begin{equation}\label{opt1}
\underset{C_i}{\min} \ \|X-X(C_1\otimes C_2)\|_F^2+\lambda\|C_1\otimes C_2\|_F^2,
\end{equation}
where $\lambda$ is a balancing parameter, and $\|.\|_F$ is the Frobenius norm.

\begin{figure}[t]
\centering
\includegraphics[width=0.9\linewidth]{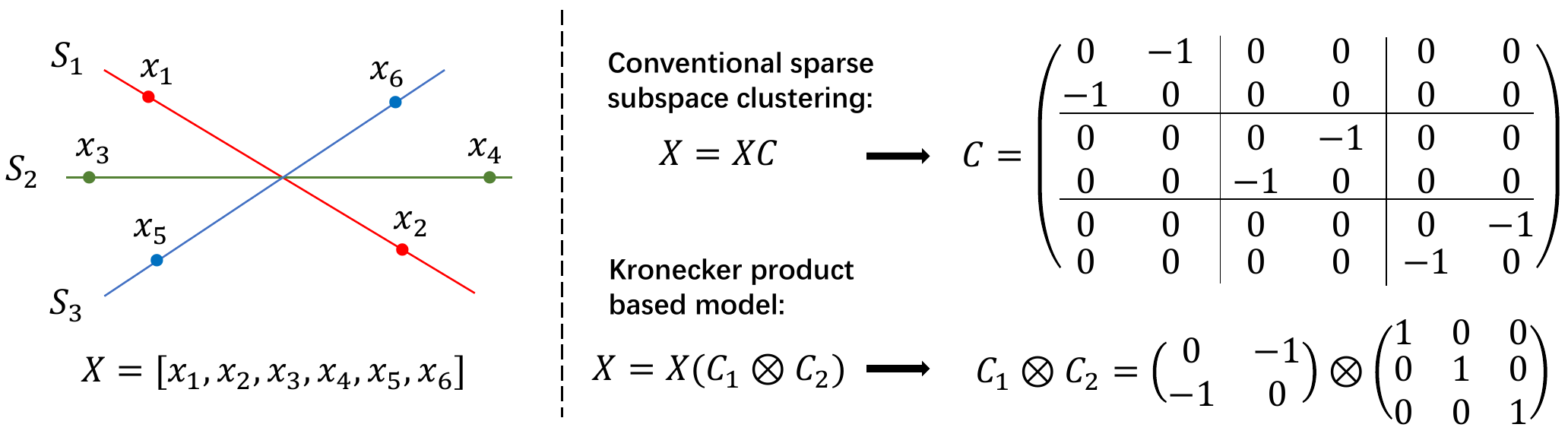}
\caption{Left: Three 1D subspaces in $\mathbb{R}^2$ with normalized data points. Right: The solutions of conventional sparse subspace clustering method (upper) and our Kronecker product based model (lower). As shown, the space and computational complexity of our model achieve significant reduction compared with conventional method.}
\label{example}
\end{figure}

\subsection{Optimization}

We solve problem~(\ref{opt1}) by updating each small matrix at a time, while keeping the other one fixed. Considering updating $C_1$, while $C_2$ fixed, we start by rewriting $\|X-X(C_1\otimes C_2)\|_F^2$ as:
\begin{equation}\label{opt2}
\begin{aligned}
&\|X-X(C_1\otimes C_2)\|_F^2 \\
=&tr((X-X(C_1\otimes C_2))^T(X-X(C_1\otimes C_2)))\\
=&\|X\|_F^2-2tr(X(C_1\otimes C_2)X^T)+tr(X(C_1\otimes C_2)(X(C_1\otimes C_2))^T).
\end{aligned}
\end{equation}
Since $\|X\|_F^2$ is a constant, let
$$
\Phi=-2tr(X(C_1\otimes C_2)X^T)+tr(X(C_1\otimes C_2)(X(C_1\otimes C_2))^T),
$$
then, the problem that minimizing $\|X-X(C_1\otimes C_2)\|_F^2$ is equivalent to minimizing $\Phi$.

According to the block property of Kronecker product~\cite{van2000ubiquitous}:
$$a^T(C_1\otimes C_2)=(vec(C_2^TM_{p_2,p_1}(a)C_1))^T,$$
where $a\in\mathbb{R}^N$ and $vec(X)$ forms a vector by column-wise stacking of the matrix $X$ into a vector, and $M_{p_2,p_1}(a)$ reshapes a $p_1p_2=N$ dimensional vector $a$ to a $p_2\times p_1$ matrix by extracting column from the vector $a$. Then
\begin{equation}
\begin{aligned}
\Phi=&\sum_{i=1}^D(-2y_i^T(C_1\otimes C_2)y_i+y_i^T(C_1\otimes C_2)(y_i^T(C_1\otimes C_2))^T)\\
=&\sum_{i=1}^D(-2(vec(C_2^TM_{p_2,p_1}(y_i)C_1))^Ty_i\\
&+(vec(C_2^TM_{p_2,p_1}(y_i)C_1))^Tvec(C_2^TM_{p_2,p_1}(y_i)C_1)).
\end{aligned}
\end{equation}
Let $H_i=C_2^TM_{p_2,p_1}(y_i)$, $G_i=M_{q_2,q_1}(y_i)$. Then, using the property of trace that $tr(ABC)=tr(BCA)$ and $tr(A^T)=tr(A)$,
\begin{equation}
\begin{aligned}
\Phi=&\sum_{i=1}^D(-2tr((H_iC_1)^TG_i)+tr((H_iC_1)^TH_iC_1))\\
=&\sum_{i=1}^D(-2tr(H_iC_1G_i^T)+tr((H_iC_1)^TH_iC_1))\\
=&\sum_{i=1}^D(\|G_i-H_iC_1\|_F^2-\|G_i\|_F^2).
\end{aligned}
\end{equation}

Since $\|G_i\|_F^2$ is a constant, the optimization objective function of $C_1$ can be written as:
\begin{equation}\label{opt4}
\underset{C_1}{\min} \ \|G-HC_1\|_F^2+\lambda\|C_1\|_F^2
\end{equation}
where $H=\sum_{i=1}^DH_i$, $G=\sum_{i=1}^DG_i$. Eq.~(\ref{opt4}) is a well known ridge regression problem~\cite{Arthur1970Ridge} whose optimal solution is $C_1=(H^TH+\lambda I)^{-1}H^TG$. We can solve $C_2$ in a similar manner to $C_1$, when $C_1$ is fixed.
As $H\in\mathbb{R}^{q_2\times p_1}$, $G\in\mathbb{R}^{q_2\times q_1}$ and $p_1p_2=N,q_1q_2=N$, the computational complexity for this solution is $\mathcal{O}(2N^{3/2})$.

When the number of small matrices is $k$, we can also solve it by updating one small matrix at a time, while keeping the remaining matrices fixed. In this situation, the problem is the same as $k=2$ solved above. As $\prod_{i=1}^kp_i=N$, $\prod_{i=1}^kq_i=N$, then the computational complexity of the whole optimization is $\mathcal{O}(kN^{3/k})$.

We have obtained the optimal solution of self-representation coefficient matrix $C=\otimes_{i=1}^kC_i$, where $C_{ij}=0$ if the $i$-th and $j$-th data points are from different subspaces. Hence, the affinity matrix $W$ can be defined as $W=|C|+|C|^T$, where $|C|$ denotes the absolute value matrix of $C$. Then the segmentation of the data $X$ in different subspaces can be obtained by applying a spectral clustering algorithm to the affinity matrix $W$. The whole Kronecker product based subspace clustering model is summarized in Algorithm~\ref{algo1}.

\begin{algorithm}[t]
\label{algo1}
 \SetAlgoLined
 \KwIn{A set of data points $X=\{x_{i}\}_{i=1}^{N}$, the number of subspaces $n$, the number of small matrices $k$ and the balance parameter $\lambda$.}
 \textbf{Steps:}\\
 1. Learn the small matrices $C_1, C_2, \cdots, C_k$.\\
 \For{$i=1,...,k$}{
 Fix $C_1, \cdots, C_{i-1}, C_{i+1}, \cdots, C_k$, update $C_i$.
 Optimize Eq.~(\ref{opt4}), estimate $C_i$ by ridge regression solution.\\
 }
 2. Calculate the self-representation coefficient matrix $C$ by the Kronecker product of small matrices, $C=\otimes_{i=1}^kC_i$.\\
 3. Construct an affinity matrix by $W=|C|+|C|^T$.\\
 4. Calculate the Laplacian matrix $L$ of $W$. \\
 5. Calculate the eigenvector matrix $V$ of $L$ corresponding to its $n$ smallest nonzero eigenvalues.\\
 6. Perform k-means clustering algorithm on the rows of $V$.\\
 \noindent\KwOut{The clustering result of $X$.}
\caption{Subspace Clustering Based on Kronecker Product.}
\end{algorithm}

\section{Theoretical Analysis}
In this section, we give a theoretical analysis of our Kronecker product based model, including a) the adaptivity on different regularizations, b) theoretical convergence analysis, c) complexity analysis.

\subsection{Adaptivity on Different Regularizations}
Since many self-representation based methods use different regularizations on the coefficient matrix, we show that our model can be applied to a variety of different regularizations. We refer to our subspace clustering method described in Section~\ref{subsection1} as KrTRR (Kronecker product based TRR). It utilizes the Frobenius norm to regularize the coefficient matrix. In Eq.~(\ref{opt4}), we simplify the sparsity constraint from $\|C_1\otimes C_2\|_F^2$ to $\|C_1\|_F^2$, using the Kronecker product lemma:
\begin{lemma}
Let $C=C_1\otimes C_2$, then $\|C\|_F^2=\|C_1\|_F^2\|C_2\|_F^2$.
\end{lemma}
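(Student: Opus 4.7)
The plan is to prove this identity directly from the block definition of the Kronecker product stated earlier in the paper. The key observation is that $C_1\otimes C_2$ is a block matrix whose $(i,j)$-th block is the scalar multiple $(C_1)_{ij}\,C_2\in\mathbb{R}^{p_2\times q_2}$, so the squared Frobenius norm, which sums squares of all entries, can be grouped block-by-block with no re-indexing tricks needed.

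First I would note that for a single block, $\|(C_1)_{ij}\,C_2\|_F^2 = (C_1)_{ij}^2\,\|C_2\|_F^2$, since scaling every entry by a fixed scalar scales every squared entry by the square of that scalar. Second, because the $p_1q_1$ blocks jointly partition the entries of $C_1\otimes C_2$ without overlap, summing the contributions gives
$$\|C_1\otimes C_2\|_F^2 = \sum_{i=1}^{p_1}\sum_{j=1}^{q_1}(C_1)_{ij}^2\,\|C_2\|_F^2 = \Bigl(\sum_{i,j}(C_1)_{ij}^2\Bigr)\|C_2\|_F^2 = \|C_1\|_F^2\,\|C_2\|_F^2,$$
which is exactly the claim.

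As a cross-check, I would rederive the identity through the trace formulation $\|A\|_F^2=\mathrm{tr}(A^TA)$ combined with the standard Kronecker identities $(A\otimes B)^T = A^T\otimes B^T$, $(A\otimes B)(C\otimes D)=(AC)\otimes(BD)$, and $\mathrm{tr}(A\otimes B)=\mathrm{tr}(A)\,\mathrm{tr}(B)$; applying these in sequence collapses $\|C_1\otimes C_2\|_F^2$ to $\mathrm{tr}(C_1^TC_1)\,\mathrm{tr}(C_2^TC_2)$ in one line, matching the block-wise answer. I do not anticipate any real obstacle: the lemma is essentially bookkeeping on the definition of the Kronecker product, and the two derivations serve mainly to confirm each other.
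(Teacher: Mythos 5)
Your block-by-block argument is exactly the paper's proof: both decompose $C_1\otimes C_2$ into the blocks $(C_1)_{ij}C_2$, observe $\|(C_1)_{ij}C_2\|_F^2=(C_1)_{ij}^2\|C_2\|_F^2$, and sum over $i,j$ to factor out $\|C_2\|_F^2$. The additional trace-based cross-check is a valid alternative derivation but adds nothing beyond what the paper already establishes.
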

\begin{proof}
Assume $C^{ij}$ is the $i$-th column $j$-th row element in $C$, $C_1\in\mathbb{R}^{m\times n}$, $C_2\in\mathbb{R}^{p\times q}$, $C\in\mathbb{R}^{mp\times nq}$. Then
$\|C\|_F^2=\|C_1\otimes C_2\|_F^2=\sum_{i=1}^m\sum_{j=1}^n\|C_1^{ij}C_2\|_F^2=\sum_{i=1}^m\sum_{j=1}^n(C_1^{ij})^2\|C_2\|_F^2=\|C_1\|_F^2\|C_2\|_F^2$.
\end{proof}

Here we introduce two additional Kronecker product lemmas to show that our model can be applied to alternative regularizations.
\begin{lemma}
Let $C=C_1\otimes C_2$, then $\|C\|_1=\|C_1\|_1\|C_2\|_1$.
\end{lemma}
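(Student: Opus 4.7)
The plan is to mirror the proof of Lemma 1, replacing the squared Frobenius norm by the entrywise $\ell_1$ norm (the sum of absolute values of all entries, which is the sparsity-inducing norm actually used in SSC and throughout this paper). The key observation is that the Kronecker product arranges $C_2$ in scaled blocks: the $(i,j)$-th block of $C = C_1 \otimes C_2$ is exactly $c_1^{ij} C_2$, where $c_1^{ij}$ denotes the $(i,j)$-th entry of $C_1$. So the absolute values of entries in that block are $|c_1^{ij}| \cdot |\text{entries of } C_2|$, and summing over the block gives $|c_1^{ij}| \, \|C_2\|_1$.

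Concretely, I would first fix notation by writing $C_1 \in \mathbb{R}^{m \times n}$ with entries $c_1^{ij}$ and $C_2 \in \mathbb{R}^{p \times q}$, so $C = C_1 \otimes C_2 \in \mathbb{R}^{mp \times nq}$ decomposes into $mn$ blocks of size $p \times q$, the $(i,j)$-th block being $c_1^{ij} C_2$. Then I would expand the $\ell_1$ norm as a sum over entries, partition the sum according to the block decomposition, and pull the scalar $|c_1^{ij}|$ outside the inner sum. The remaining inner sum is $\|C_2\|_1$ by definition, so
\[
\|C\|_1 = \sum_{i=1}^m \sum_{j=1}^n \|c_1^{ij} C_2\|_1 = \sum_{i=1}^m \sum_{j=1}^n |c_1^{ij}|\, \|C_2\|_1 = \|C_1\|_1\, \|C_2\|_1.
\]

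There is no real obstacle here — the proof is a direct, one-line application of the block structure of the Kronecker product combined with the homogeneity $\|\alpha C_2\|_1 = |\alpha| \|C_2\|_1$. The only point requiring a sentence of care is flagging that $\|\cdot\|_1$ is being used in the entrywise (vectorized) sense rather than the induced operator $1$-norm (maximum absolute column sum), since the latter identity is also true for Kronecker products but would require a different argument via $\|A \otimes B\|_{\text{op},1} = \|A\|_{\text{op},1}\|B\|_{\text{op},1}$. For the SSC-style regularization used in this paper, the entrywise interpretation is the relevant one and the proof above suffices.
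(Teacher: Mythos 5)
Your proof is correct and follows essentially the same route as the paper's: decompose $C_1\otimes C_2$ into $mn$ blocks of the form $c_1^{ij}C_2$, use the homogeneity of the entrywise $\ell_1$ norm to pull out $|c_1^{ij}|$, and sum. The added remark distinguishing the entrywise norm from the induced operator $1$-norm is a useful clarification but does not change the argument.
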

\begin{proof}
Assume $C^{ij}$ is the $i$-th column $j$-th row element in $C$, $C_1\in\mathbb{R}^{m\times n}$, $C_2\in\mathbb{R}^{p\times q}$, $C\in\mathbb{R}^{mp\times nq}$. Then
$\|C\|_1=\|C_1\otimes C_2\|_1=\sum_{i=1}^m\sum_{j=1}^n\||C_1^{ij}|C_2\|_1=\sum_{i=1}^m\sum_{j=1}^n|C_1^{ij}|\|C_2\|_1=\|C_1\|_1\|C_2\|_1$.
\end{proof}

\begin{lemma}
Let $C=C_1\otimes C_2$, then $\|C\|_*=\|C_1\|_*\|C_2\|_*$.
\end{lemma}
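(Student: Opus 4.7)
The plan is to reduce the claim to the known formula for the singular values of a Kronecker product, from which the nuclear norm identity follows by a single summation. Specifically, if $C_1$ has singular values $\sigma_1,\dots,\sigma_r$ and $C_2$ has singular values $\tau_1,\dots,\tau_s$, then $C_1\otimes C_2$ has singular values $\{\sigma_i\tau_j\}_{i,j}$, so that
\[
\|C_1\otimes C_2\|_* \;=\; \sum_{i,j}\sigma_i\tau_j \;=\; \Bigl(\sum_i\sigma_i\Bigr)\Bigl(\sum_j\tau_j\Bigr) \;=\; \|C_1\|_*\,\|C_2\|_*.
\]
So the real content is to exhibit an SVD of $C_1\otimes C_2$ whose singular values are exactly these products.

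First, I would write thin SVDs $C_1=U_1\Sigma_1 V_1^T$ and $C_2=U_2\Sigma_2 V_2^T$ with $U_i,V_i$ having orthonormal columns and $\Sigma_i$ nonnegative diagonal. Using the mixed-product property $(A\otimes B)(C\otimes D)=(AC)\otimes(BD)$ together with $(A\otimes B)^T=A^T\otimes B^T$, I obtain
\[
C_1\otimes C_2 \;=\; (U_1\otimes U_2)\,(\Sigma_1\otimes\Sigma_2)\,(V_1\otimes V_2)^T.
\]
A quick verification shows that $U_1\otimes U_2$ and $V_1\otimes V_2$ have orthonormal columns (since $(U_1\otimes U_2)^T(U_1\otimes U_2)=(U_1^TU_1)\otimes(U_2^TU_2)=I\otimes I=I$), and $\Sigma_1\otimes\Sigma_2$ is a nonnegative diagonal matrix whose diagonal entries are precisely the products $\sigma_i\tau_j$. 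Hence this factorization is a valid SVD of $C_1\otimes C_2$, so its singular values are exactly $\{\sigma_i\tau_j\}$.

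Summing these singular values gives the nuclear norm of $C_1\otimes C_2$, which factors into the product of the sums of singular values of $C_1$ and $C_2$, yielding $\|C_1\|_*\|C_2\|_*$. The only mild obstacle is bookkeeping for rectangular $C_1$ and $C_2$: the Kronecker product of rectangular diagonal matrices may need to be interpreted with its zero padding, but since zero padded entries contribute nothing to the sum of singular values, the result is unchanged. This mirrors the structure of the preceding two lemmas (for $\|\cdot\|_F^2$ and $\|\cdot\|_1$), with the only extra ingredient being the orthogonal invariance of the nuclear norm applied to the Kronecker-product SVD.
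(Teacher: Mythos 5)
Your proof is correct and follows essentially the same route as the paper: build an SVD of $C_1\otimes C_2$ from the SVDs of the factors via the mixed-product property, observe that the singular values are the pairwise products $\sigma_i\tau_j$, and factor the resulting double sum. Your version is slightly more careful than the paper's (you verify orthonormality of $U_1\otimes U_2$ and $V_1\otimes V_2$ and address the rectangular/zero-padding bookkeeping), but the underlying argument is identical.
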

\begin{proof}
Assume the SVD decompositions of $C_1$ and $C_2$ are $C_1=U_1\Sigma_1 V_1^T$ and $C_2=U_2\Sigma_2 V_2^T$, respectively. Then $\|C_1\|_*$ is the sum of nonzero entries in the diagonal matrix $\Sigma_1$, $\|C_2\|_*$ is the sum of nonzero entries in the diagonal matrix $\Sigma_2$. $C=C_1\otimes C_2=(U_1\Sigma_1 V_1^T)\otimes(U_2\Sigma_2 V_2^T)=(U_1\otimes U_2)((\Sigma_1 V_1^T)\otimes(\Sigma_2 V_2^T))=(U_1\otimes U_2)(\Sigma_1\otimes\Sigma_2)(V_1\otimes V_2)^T$. Because $\Sigma_1\otimes\Sigma_2$ is a diagonal matrix, then the SVD decomposition of $C$ is $C=(U_1\otimes U_2)(\Sigma_1\otimes\Sigma_2)(V_1\otimes V_2)^T$. So that $\|C\|_*$ is the sum of nonzero entries in the diagonal matrix $\Sigma_1\otimes\Sigma_2$ which is the product of the sum of nonzero entries in the diagonal matrix $\Sigma_1$ and $\Sigma_2$. Then $\|C\|_*=\|C_1\|_*\|C_2\|_*$.
\end{proof}

Based on these two lemmas, the $\ell_1$ norm and nuclear norm regularizations on the coefficient matrix $\|\otimes_{i=1}^kC_i\|_1$, $\|\otimes_{i=1}^kC_i\|_*$ can be simplified to $\|C_i\|_1$ and $\|C_i\|_*$ as shown in Eq.~(\ref{opt4}). So we can also utilize the $\ell_1$ norm and nuclear norm on the self-representation coefficient matrix with a manner similar to SSC and LRR, i.e.
\begin{equation}\label{opt5}
\underset{C_i}{\min} \ \|X-X(\otimes_{i=1}^kC_i)\|_F^2+\lambda\|\otimes_{i=1}^kC_i\|_1
\end{equation}
and
\begin{equation}\label{opt6}
\underset{C_i}{\min} \ \|X-X(\otimes_{i=1}^kC_i)\|_F^2+\lambda\|\otimes_{i=1}^kC_i\|_*
\end{equation}

We refer to these two methods as KrSSC and KrLRR. Following~\cite{Peng2017Subspace}, we can preprocess the data by 2DPCA~\cite{yang2004two} to retain the spatial information in the 2D data. Then we can use the KrTRR method to learn the coefficient matrix $C$ as done in~\cite{Peng2017Subspace}. We refer to this method as KrNVR3. The optimization of these variants of the Kronecker product based method are essentially the same as KrTRR.

In summary, we can leverage the Kronecker product to reduce the computational complexity of learning the coefficient matrix with different regularization options, e.g. Frobenius norm, $\ell_1$ norm and nuclear norm. We present four methods KrSSC, KrLRR, KrTRR and KrNVR3 based on different regularizations and compare them with baseline methods in Section~\ref{sec:experiments}.

\subsection{Theoretical Convergence Analysis}

Here, we prove the reliability of Kronecker product approximation using a theoretical convergence analysis.

According to the idea of mathematical induction, we consider the special condition that $k=2$ to approximate a $p^2\times p^2$ matrix $C$ by $A\otimes A$, where $A$ is a $p\times p$ matrix. The matrix $C$ is partitioned into $p^2$ matrices with dimension $p\times p$, i.e.
$$
C=
\begin{bmatrix}
C_{11}  & \cdots\ &C_{1p}\\
 \vdots & \ddots  & \vdots\\
 C_{p1} & \cdots\ & C_{pp}\\
\end{bmatrix}
$$
Let
$$
C^*=[vec(C_{11}),vec(C_{12}),\cdots,vec(C_{pp})]
$$
Then, we can denote the approximate loss function by:
\begin{equation}\label{opt6}
\begin{aligned}
l&=tr(A\otimes A-C)^2\\
&=(tr(A)^2)^2-2a^TC^*a+tr(C)^2
\end{aligned}
\end{equation}
where $a=vec(A)$. Since
\begin{equation}
\begin{aligned}
tr(A\otimes A)C&=(vec(C))^Tvec(A\otimes A)\\
&=(vec(C^*))^T(vec(A)\otimes vec(A))\\
&=(vec(C^*))^Tvec((vec(A)(vec(A))^T)\\
&=trC^*(vec(A))(vec(A))^T\\
&=(vec(A))^TC^*vec(A)\\
&=a^TC^*a
\end{aligned}
\end{equation}

Let $\nu (A)$ be the vector with non-duplicate elements of $vec(A)$ and $a=vec(A)=D\nu(A)$, here $D$ is the duplication matrix. Then, the first differential of $l$ is
\begin{equation}
\begin{aligned}
\text{d}l&=4(tr(A)^2)a^T\text{d}a-4a^TC^*\text{d}a\\
&=4(tr(A)^2)a^TD\text{d}\nu(A)-4a^TC^*D\text{d}\nu(A)
\end{aligned}
\end{equation}
The first derivative is
\begin{equation}
\frac{\partial l}{\partial \nu(A)}=4(tr(A)^2)D^Tvec(A)-4D^TC^*vec(A)
\end{equation}
Then, we obtain the first-order condition
\begin{equation}
tr(A)^2vec(A)=C^*vec(A)
\end{equation}

This is an eigenvalue problem in terms of $C^*$. The vector $a$ minimizing Eq.~(\ref{opt6}) must be proportional to the eigenvector corresponding to the largest eigenvalue of $C^*$. In other words, for an arbitrary matrix with any dimension, we can partition it based on the dimensions of small matrices needed to approximate the large matrix via Kronecker product. moreover, the small matrices always have a convergent solution through the largest eigenvector of the partitioned large matrix. This means that the technique used to approximate the large self-representation matrix by the Kronecker product of small matrices in our model is reliable.

\subsection{Complexity Analysis}
Here we discuss the space memory requirement and computational complexity of our Kronecker product based methods and compare it to the relevant methods in the literature. When the data size is $N$, methods in~\cite{Elhamifar2013Sparse,Liu2013Robust,Peng2015Robust,Peng2017Subspace} need to solve the self-representation coefficient matrix $C$ with the dimension $N\times N$, i.e., the memory space complexity of these methods is $\mathcal{O}(N^2)$. But in our work, we leverage the Kronecker product of a set of small matrices to approximate the self-representation coefficient matrix $C$. When the number of small matrices is $k$, the size of small matrices is $N^{2/k}$. Thus, the space complexity of our methods is $\mathcal{O}(kN^{2/k})$.

For learning process the self-representation coefficient matrix $C$ with size $N^2$, existing methods use a SVD decomposition operation whose computational complexity is $\mathcal{O}(N^3)$. As our methods update one small matrix at a time, and the size of the small matrix is $N^{2/k}$, we achieve $\mathcal{O}(kN^{3/k})$ computational complexity. Since $N^{1/k} \ll N$ when $k>1$, there is significant reduction in both the memory space and computational complexity compared with the existing methods. This efficiency gain is achieved by using the Kronecker product.

\section{Experiments}\label{sec:experiments}

We have conducted three sets of experiments on both real and synthetic datasets to verify the effectiveness of the proposed methods. Several state-of-the-art or classical spectral subspace clustering methods were taken as the baseline algorithms. These included sparse subspace clustering (SSC)~\cite{Elhamifar2013Sparse}, low-rank representation (LRR)~\cite{Liu2013Robust}, thresholding ridge regression (TRR)~\cite{Peng2015Robust}, and nonlinear variance regularized ridge regression (NVR3)~\cite{Peng2017Subspace}. In the experiments, we used the codes provided by the respective authors for computing the self-representation matrix $C$, where the parameters were tuned to give the best clustering accuracy. Then we applied the normalized spectral clustering in~\cite{von2007tutorial} to the affinity matrix $W=|C|+|C|^T$.

\textbf{Evaluation criteria:} we used both the clustering accuracy and running time of the whole clustering process to evaluate the performance of the subspace clustering methods, where the clustering accuracy is calculated as
$$
\text{clustering accuracy}=\frac{\text{\# of correctly classified points}}{\text{total \# of points}} \times 100
$$
In all our experiments, the clustering accuracy and running time were averaged over 10 trials. All experiments were implemented with MATLAB code and ran on a PC with Intel Core-i7 3.6GHz CPU, 32GB RAM.
\setlength{\tabcolsep}{2pt}
\begin{table}[t]
  \centering
  \caption{The average running time (seconds) and clustering accuracy on the CMU PIE database with different number of objects. Each object consists of 170 face images under different illuminations and expressions. '-' denotes that the computational cost is unacceptable for our PC, due to the memory and time limit.}
  \begin{center}
    \begin{tabular}{c|c|c|c|c|c|c|c|c|c|c}
    \hline
    \multirow{2}{*}{No. Objects}&
    \multicolumn{2}{c|}{5 Objects}&\multicolumn{2}{c|}{10 Objects}&\multicolumn{2}{c|}{20 Objects}&\multicolumn{2}{c|}{40 Objects}&\multicolumn{2}{c}{60 Objects}\cr\cline{2-11}
    &Time&Acc.&Time&Acc.&Time&Acc.&Time&Acc.&Time&Acc.\cr
    \hline
    \hline
    SSC&243.6&92.47&1182&89.25&3618&84.31&14502&82.37&-&-\cr
    \textbf{KrSSC}&\textbf{12.7}&91.28&\textbf{26.8}&88.27&\textbf{61.4}&83.86&\textbf{150.2}&81.75&\textbf{274.3}&79.48\cr
    LRR&216.4&94.53&852.5&92.14&2743&89.21&11463&85.47&-&-\cr
    \textbf{KrLRR}&\textbf{9.7}&92.51&\textbf{20.4}&90.72&\textbf{57.2}&88.13&\textbf{145.8}&85.21&\textbf{254.8}&83.65\cr
    TRR&152.7&97.35&548.2&96.05&2167&94.54&8427&91.74&-&-\cr
    \textbf{KrTRR}&\textbf{7.5}&95.21&\textbf{18.3}&94.52&\textbf{52.8}&93.84&\textbf{143.5}&90.23&\textbf{260.1}&87.26\cr
    NVR3&190.5&98.51&624.6&97.51&2536&95.75&11826&93.15&-&-\cr
    \textbf{KrNVR3}&\textbf{11.3}&97.14&\textbf{25.7}&96.26&\textbf{72.4}&93.96&\textbf{180.4}&91.57&\textbf{312.5}&89.15\cr
    \hline
    \end{tabular}
    \end{center}
\label{face}
\end{table}

\subsection{Face Clustering}

As subspaces are commonly used to capture the appearance of faces under varying illuminations, we test the performance of our method on face clustering with the CMU PIE database~\cite{Sim¨C2001¨C8175}. The CMU PIE database contains 41,368 images of 68 people under 13 different poses, 43 different illumination conditions, and 4 different expressions. In our experiment, we used the face images in five near frontal poses (P05, P07, P09, P27, P29). Then each people has 170 face images under different illuminations and expressions. Each image was manually cropped and normalized to a size of $32\times32$ pixels. In each experiment, we randomly picked $n\in\{5, 10, 20, 40, 60\}$ individuals to investigate the performance of the proposed method. For our models, we set the number of small matrices $k=2$ and $\lambda = 0.25$. For different number of objects $n$, we randomly chose $n$ people with 10 trials and took all the images of them as the subsets to be clustered. Then we conducted experiments on all 10 subsets and report the average running time and clustering accuracy with a different number of objects in Table~\ref{face}.

In the original work, SSC, LRR, TRR, and NVR3 all test on a small subset which consists of no more than 1,000 data points. Because of the memory and time limit, these methods can not run on a dataset of size $\mathcal{O}(10^4)$. In our experiment, the data size is in the range of $N\in\{850, 1700, 3400, 6800, 10200\}$, corresponding to 5-60 objects per face.  As shown in Table~\ref{face}, the efficiency of all alternative methods degrades drastically when $N$ increases. When $N>10000$ (60 objects), the space and computational complexity of these methods are unacceptable for our PC. In contrast, the computational time of Kronecker product based methods is significantly lower compared with the corresponding approaches. Our methods can easily handle more than 10,000 data points with an acceptable computing time. Further, we can see from Table~\ref{face} that the Kronecker product based methods also obtain competitive clustering accuracy (down 2 percent at most). This suggests that our model is potentially more suitable than previous methods on large scale dataset for real world applications.

\subsection{Handwritten Digit Clustering}
Database of handwritten digits is also widely used in subspace learning and clustering. We test the proposed methods on handwritten digit clustering with the MNIST dataset~\cite{L1998Gradient}. This dataset contains 10 clusters, including handwritten digits 0-9. Each cluster contains 6,000 images for training and 1,000 images for testing, with a size of $28\times28$ pixels in each image. We used all the 70,000 handwritten digit images for subspace clustering. Different from the experimental settings for face clustering, we fixed the number of clusters $n=10$ and chose different number of data points for each cluster with 10 trials. Each cluster contains $N_i$ data points randomly chosen from corresponding 7,000 images, where $N_i\in\{50, 100, 1000, 3000, 7000\}$, so that the number of points $N\in\{500, 1000, 10000, 30000, 70000\}$. Then we applied all methods on this dataset for comparison. For our models, we set the number of small matrices $k=2$ and $\lambda=0.2$. The average running time and clustering accuracy with different number of data points are shown in Table~\ref{digit}.

It can be seen that the efficiency of KrSSC, KrLRR, KrTRR and KrNVR3 significantly outperform the corresponding baseline methods, which indicates the effectiveness of the Kronecker product method proposed in this paper. Table~\ref{digit} also shows that our method and its variants obtain competitive clustering accuracy compared with the corresponding baseline methods.

\setlength{\tabcolsep}{2pt}
\begin{table}[t]
  \centering
  \caption{The average running time (seconds) and clustering accuracy on the MNIST dataset with different number of data points. The data consists of randomly chosen $N_i\in\{50, 100, 1000, 3000, 7000\}$ images for each of the 10 digits. '-' denotes that the computational cost is unacceptable on our PC due to the memory and time cost.}
  \begin{center}
    \begin{tabular}{c|c|c|c|c|c|c|c|c|c|c}
    \hline
    \multirow{2}{*}{No. Points}&
    \multicolumn{2}{c|}{500}&\multicolumn{2}{c|}{1000}&\multicolumn{2}{c|}{10000}&\multicolumn{2}{c|}{30000}&\multicolumn{2}{c}{70000}\cr\cline{2-11}
    &Time&Acc.&Time&Acc.&Time&Acc.&Time&Acc.&Time&Acc.\cr
    \hline
    \hline
    SSC&152.4&83.36&638.2&82.45&-&-&-&-&-&-\cr
    \textbf{KrSSC}&\textbf{7.3}&81.25&\textbf{18.7}&81.17&\textbf{192.4}&79.42&\textbf{411.5}&76.15&\textbf{683.2}&73.34\cr
    LRR&145.5&85.75&614.8&85.14&-&-&-&-&-&-\cr
    \textbf{KrLRR}&\textbf{7.1}&83.24&\textbf{16.4}&83.20&\textbf{160.8}&81.52&\textbf{384.5}&79.21&\textbf{641.5}&76.53\cr
    TRR&113.2&90.28&476.4&89.78&-&-&-&-&-&-\cr
    \textbf{KrTRR}&\textbf{6.5}&88.95&\textbf{15.8}&88.65&\textbf{168.2}&85.76&\textbf{403.8}&83.26&\textbf{795.6}&81.53\cr
    NVR3&118.5&91.85&531.1&91.28&-&-&-&-&-&-\cr
    \textbf{KrNVR3}&\textbf{8.3}&90.08&\textbf{22.5}&90.14&\textbf{243.6}&86.27&\textbf{627.5}&83.87&\textbf{968.4}&82.41\cr
    \hline
    \end{tabular}
    \end{center}
\label{digit}
\end{table}

\subsection{Large-Scale Experiment}

\setlength{\tabcolsep}{2pt}
\begin{table}[t]
  \centering
  \caption{The average running time (seconds) and clustering accuracy on synthetic dataset with different number of data points. The data consists of randomly chosen $N_i\in\{100, 1000, 2000, 10000, 20000\}$ points for each of the 5 subspaces. '-' denotes that the computational cost is unacceptable for our PC due to the memory and time limit.}
  \begin{center}
    \begin{tabular}{c|c|c|c|c|c|c|c|c|c|c}
    \hline
    \multirow{2}{*}{No. Points}&
    \multicolumn{2}{c|}{500}&\multicolumn{2}{c|}{5000}&\multicolumn{2}{c|}{10000}&\multicolumn{2}{c|}{50000}&\multicolumn{2}{c}{100000}\cr\cline{2-11}
    &Time&Acc.&Time&Acc.&Time&Acc.&Time&Acc.&Time&Acc.\cr
    \hline
    \hline
    SSC&135.4&94.15&1824&93.86&5413&91.05&-&-&-&-\cr
    \textbf{KrSSC}&\textbf{6.2}&92.12&\textbf{53.4}&91.18&\textbf{164.2}&89.73&\textbf{231.5}&85.04&\textbf{285.7}&81.85\cr
    LRR&118.6&95.27&1645&94.57&4853&92.14&-&-&-&-\cr
    \textbf{KrLRR}&\textbf{6.0}&93.24&\textbf{49.3}&92.21&\textbf{152.7}&89.49&\textbf{216.2}&86.03&\textbf{274.3}&82.20\cr
    TRR&89.5&98.85&1627&97.15&5825&95.69&-&-&-&-\cr
    \textbf{KrTRR}&\textbf{5.9}&98.06&\textbf{46.7}&96.53&\textbf{185.3}&95.05&\textbf{250.3}&93.16&\textbf{314.2}&89.06\cr
    NVR3&96.4&99.91&1752&98.61&6024&97.10&-&-&-&-\cr
    \textbf{KrNVR3}&\textbf{6.0}&99.07&\textbf{52.8}&98.11&\textbf{207.5}&96.24&\textbf{260.1}&93.89&\textbf{321.5}&90.62\cr
    \hline
    \end{tabular}
    \end{center}
\label{synthetic}
\end{table}

To verify the scalability of our method on large scale datasets, we also ran experiments on synthetic data. Following~\cite{you2016scalable}, we randomly generated $n=5$ subspaces, each of dimension $d=6$ in an ambient space of dimension $D=9$. Each subspace contains $N_i$ data points randomly generated on the unit sphere, where $N_i\in\{100, 1000, 2000, 10000, 20000\}$, so that the number of points $N\in\{500, 5000, 10000, 50000, 100000\}$. Due to the memory and time limit, SSC, LRR, TRR and NVR3 were run for $N\le 10000$. For our models, $\lambda=0.2$, the number of small matrices $k=2$ for $N\in\{500,5000,10000\}$ and $k=3$ for $N\in\{50000,100000\}$. With different number of sample points, we conducted experiments on all methods and report the average running time and clustering accuracy in Table~\ref{synthetic}.

As shown in Table~\ref{synthetic}, the advantage of our method and its variants over the baseline methods is more marked on large scale datasets. When the dataset size reaches 10,000, the computational running time of the alternate methods under comparison are about two hours each, but our Kronecker product based methods only need a few thousand seconds even for 100,000 data points. From Table~\ref{synthetic}, it is also clear that when $k$ increases from 2 to 3 for $N\in\{50000,100000\}$, the running time decreases significantly. The clustering accuracy can also be guaranteed compared with existing methods. Due to the limitations of memory space and computational complexity, the alternative methods can not be applied to a dataset of larger than 10,000 points. This again suggests that our methods are potentially more suitable for large real world applications.

\setlength{\tabcolsep}{8pt}
\begin{table}[t]
\caption{The average running time and clustering accuracy of our methods with different $k$.}
\begin{center}
\begin{tabular}{c|cccc}
\hline
$k$ &2&3&4&5\\
\hline
\multicolumn{5}{l}{average running time (seconds):}\cr\cline{1-5}
\hline
KrSSC &715.6 &285.7&61.2&25.4\\
KrLRR &682.5&274.3&52.7&20.6\\
KrTRR &755.1&314.2&84.3&31.5\\
KrNVR3 &794.3&321.5&91.6&36.2\\
\hline
\multicolumn{5}{l}{average clustering accuracy:}\cr\cline{1-5}
\hline
KrSSC &83.14&81.85&75.42&67.25\\
KrLRR &84.43&82.20&77.16&68.17\\
KrTRR &90.75&89.06&84.27&73.41\\
KrNVR3 &92.54&90.62&85.34&75.24\\
\hline
\end{tabular}
\end{center}
\label{parametertable}
\end{table}
\subsection{Parameter Sensitivity}
Here, we report experimental results on a synthetic dataset to illustrate the sensitivity of the Kronecker product based methods to parameter variations. As the parameters $k$ (number of the small matrices) and $\lambda$ (the balancing parameter of Eq.~(\ref{opt1})) in our model are both related to the dataset size $N$, we fix $N=100000$. Table~\ref{parametertable} shows the average running time and clustering accuracy of our methods with different $k\in\{2,3,4,5\}$. We can see that when $k$ increases, the running time significantly decreases but with the sacrifice of clustering accuracy. This implies that the number of small matrices $k$ should be determined by the size of dataset with a compromise between efficiency and accuracy. Figure~\ref{parameterfigure} shows the clustering accuracy of our methods with different balance parameter $\lambda$. It is evident that the clustering accuracy is insensitive when $\lambda\in(0.1,0.5)$.

\begin{figure}[t]
\centering
\includegraphics[width=0.45\linewidth]{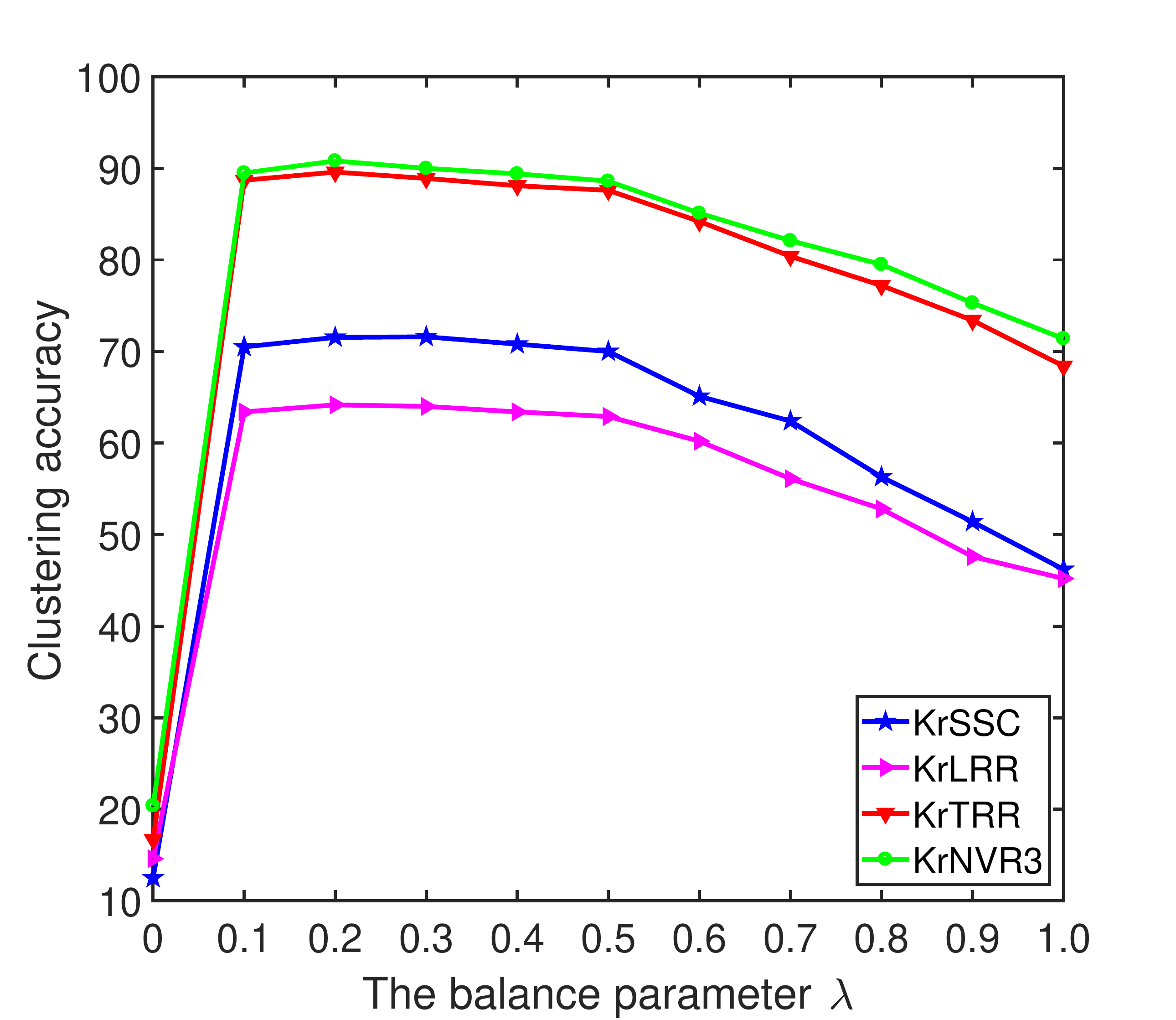}
   \caption{The average clustering accuracy of our methods with different balance parameter $\lambda$.}
\label{parameterfigure}
\end{figure}

\section{Conclusion}

We have presented a fast subspace clustering model based on the Kronecker product. Due to the property that the Kronecker product of a block diagonal matrix and any other matrix is still a block diagonal matrix, we learn the representation matrix of spectral clustering using the Kronecker product of a set of smaller matrices. Thanks to the superiority of the Kronecker product in reducing the computational complexity of matrix operations, the memory space and computational complexity of our methods achieve significant efficiency gain compared with several baseline approaches (SSC, LRR, TRR, and NVR3). We have presented four variants of the Kronecker product based method, namely KrSSC, KrLRR, KrTRR and KrNVR3. Experimental results on face clustering and handwriting digit clustering show that our methods achieve significantly improvement in efficiency compared with the state-of-the-art methods. Moreover, we have presented results on synthetic data which has verified the scalability of our methods on large scale datasets.

\bibliographystyle{splncs03}
\bibliography{egbib}

\end{document}